\newtheorem{theorem}{Theorem}
\renewenvironment{proof}{{\bfseries Proof}}{\qed}
\newcommand{\R}{\mathbb{R}}
\newcommand{\Hilbert}{\mathcal{H}}
\newcommand{\K}{\mathcal{K}}
\newcommand{\dist}{\mathop{\mathrm{dist}}}
\newcommand{\vspan}{\mathop{\mathrm{span}}}
\newcommand{\minimize}[1]{\underset{#1}{\operatorname{minimize}}\,}
\newcommand{\maximize}[1]{\underset{#1}{\operatorname{maximize}}\,}
\begin{document}
%%%% BF

\newcommand{\ab}{\mathbf{a}}
\newcommand{\bb}{\mathbf{b}}
\newcommand{\eb}{\mathbf{e}}
\newcommand{\ib}{\mathbf{i}}
\newcommand{\pb}{\mathbf{p}}
\newcommand{\qb}{\mathbf{q}}
\newcommand{\vb}{\mathbf{v}}
\newcommand{\ub}{\mathbf{u}}
\newcommand{\xb}{\mathbf{x}}
\newcommand{\yb}{\mathbf{y}}
\newcommand{\zb}{\mathbf{z}}

\newcommand{\Ab}{\mathbf{A}}
\newcommand{\Bb}{\mathbf{B}}
\newcommand{\Cb}{\mathbf{C}}
\newcommand{\Db}{\mathbf{D}}
\newcommand{\Eb}{\mathbf{E}}
\newcommand{\Fb}{\mathbf{F}}
\newcommand{\Gb}{\mathbf{G}}
\newcommand{\Hb}{\mathbf{H}}
\newcommand{\Ib}{\mathbf{I}}
\newcommand{\Jb}{\mathbf{J}}
\newcommand{\Kb}{\mathbf{K}}
\newcommand{\Lb}{\mathbf{L}}
\newcommand{\Pb}{\mathbf{P}}
\newcommand{\Qb}{\mathbf{Q}}
\newcommand{\Rb}{\mathbf{R}}
\newcommand{\Sb}{\mathbf{S}}
\newcommand{\Vb}{\mathbf{V}}
\newcommand{\Ub}{\mathbf{U}}
\newcommand{\Wb}{\mathbf{W}}
\newcommand{\Xb}{\mathbf{X}}
\newcommand{\Yb}{\mathbf{Y}}
\newcommand{\Zb}{\mathbf{Z}}

\newcommand{\alphab}{\boldsymbol{\alpha}}
\newcommand{\betab}{\boldsymbol{\beta}}
\newcommand{\gammab}{\boldsymbol{\gamma}}
\newcommand{\phib}{\boldsymbol{\phi}}
\newcommand{\Phib}{\boldsymbol{\Phi}}
\newcommand{\Qhib}{\boldsymbol{\Qhi}}
\newcommand{\omegab}{\boldsymbol{\omega}}
\newcommand{\psib}{\boldsymbol{\psi}}
\newcommand{\sigmab}{\boldsymbol{\sigma}}
\newcommand{\nub}{\boldsymbol{\nu}}
\newcommand{\thetab}{\boldsymbol{\theta}}
\newcommand{\delb}{\boldsymbol{\delta}}
\newcommand{\rhob}{\boldsymbol{\rho}}
\newcommand{\Pib}{\boldsymbol{\Pi}}
\newcommand{\pib}{\boldsymbol{\pi}}
\newcommand{\Sigmab}{\boldsymbol{\Sigma}}

%%%%%% CAL

\newcommand{\Cc}{\mathcal{C}}
\newcommand{\Ec}{\mathcal{E}}
\newcommand{\Fc}{\mathcal{F}}
\newcommand{\Hc}{\mathcal{H}}
\newcommand{\Kc}{\mathcal{K}}
\newcommand{\Lc}{\mathcal{L}}
\newcommand{\Nc}{\mathcal{N}}
\newcommand{\Oc}{\mathcal{O}}
\newcommand{\Pc}{\mathcal{P}}
\newcommand{\Rc}{\mathcal{R}}
\newcommand{\Uc}{\mathcal{U}}
\newcommand{\Xc}{\mathcal{X}}
\newcommand{\Yc}{\mathcal{Y}}
%
% paper title
% Titles are generally capitalized except for words such as a, an, and, as,
% at, but, by, for, in, nor, of, on, or, the, to and up, which are usually
% not capitalized unless they are the first or last word of the title.
% Linebreaks \\ can be used within to get better formatting as desired.
% Do not put math or special symbols in the title.
\title{\vspace{5mm} Learning from Non-Random Data in Hilbert Spaces:\\
An Optimal Recovery Perspective}
%
%
% author names and IEEE memberships
% note positions of commas and nonbreaking spaces ( ~ ) LaTeX will not break
% a structure at a ~ so this keeps an author's name from being broken across
% two lines.
% use \thanks{} to gain access to the first footnote area
% a separate \thanks must be used for each paragraph as LaTeX2e's \thanks
% was not built to handle multiple paragraphs
%

\author{Simon Foucart, Chunyang Liao, Shahin Shahrampour, and Yinsong Wang% <-this % stops a space
\thanks{Simon Foucart and Chunyang Liao are with the Department of Mathematics, Texas A\&M University, College Station, TX 77843 USA.}
\thanks{Shahin Shahrampour and Yinsong Wang are with the Department of Industrial \& Systems Engineering, Texas A\&M University, College Station, TX 77843 USA.}
}

\maketitle

% As a general rule, do not put math, special symbols or citations
% in the abstract or keywords.
\begin{abstract}
The notion of generalization in classical Statistical Learning is often attached to the postulate that data points are independent and identically distributed (IID) random variables. While relevant in many applications, this postulate may not hold in general, encouraging the development of learning frameworks that are robust to non-IID data. In this work, we consider the regression problem from an Optimal Recovery perspective. Relying on a model assumption comparable to choosing a hypothesis class, a learner aims at minimizing the worst-case error, without recourse to any probabilistic assumption on the data. We first develop a semidefinite program for calculating the worst-case error of any recovery map in finite-dimensional Hilbert spaces. Then, for any Hilbert space, we show that Optimal Recovery provides a formula which is user-friendly from an algorithmic point-of-view, as long as the hypothesis class is linear. Interestingly, this formula coincides with kernel ridgeless regression in some cases, proving that minimizing the average error and worst-case error can yield the same solution. We provide numerical experiments in support of our theoretical findings. 
\end{abstract}

% Note that keywords are not normally used for peerreview papers.

\begin{IEEEkeywords}
  Optimal Recovery,
  approximability models, 
  worst-case errors,
  Hilbert spaces.
\end{IEEEkeywords}

% For peer review papers, you can put extra information on the cover
% page as needed:
\ifCLASSOPTIONpeerreview
\begin{center} \bfseries EDICS Category: 3-BBND \end{center}
\fi
%
% For peerreview papers, this IEEEtran command inserts a page break and
% creates the second title. It will be ignored for other modes.
\IEEEpeerreviewmaketitle

\section{Introduction}
% The very first letter is a 2 line initial drop letter followed
% by the rest of the first word in caps.
% 
% form to use if the first word consists of a single letter:
% \IEEEPARstart{A}{demo} file is ....
% 
% form to use if you need the single drop letter followed by
% normal text (unknown if ever used by the IEEE):
% \IEEEPARstart{A}{}demo file is ....
% 
% Some journals put the first two words in caps:
% \IEEEPARstart{T}{his demo} file is ....
% 
% Here we have the typical use of a "T" for an initial drop letter
% and "HIS" in caps to complete the first word.
\IEEEPARstart{L}{et} us place ourselves in a classical scenario where data about an unknown function $f_0$
take the form 
\begin{equation}
\label{data}
y_i = f_0(\xb_i),
\qquad i \in [1:m].
\end{equation}
The values $y_i \in \R$ and the evaluations points $\xb_i \in \Omega \subseteq \R^d$ are available to the learner. 
The goal is to `learn' the function $f_0$ from the data \eqref{data} by producing a surrogate function $\hat{f}$ for $f_0$. 
Supervised Machine Learning methods compute such an $\hat{f}$ from a hypothesis class selected in advance. 
The performance of a method then depends on the choice of this hypothesis class: a good class should obviously approximate functions of interest well. 
This translates into a small {\em approximation error}, which is one of the constituents towards the total error of a method. 
Another constituent is the {\em estimation error}. 
In classical Statistical Learning \cite{vapnik1999overview}, the latter is often analyzed by adopting a postulate that the $\xb_i$'s are independent realizations of a random variable with an unknown distribution on $\Omega$. 
While relevant in many applications, this postulate may not hold in general, encouraging the development of learning frameworks that are robust to non-IID data.

In this work, we consider the regression problem from an Optimal Recovery perspective, without recourse to any probabilistic assumption on the data. 
Indeed, in the absence of randomness, an average-case analysis is not possible anymore. Instead, the learner aims at minimizing the worst-case (prediction) error by relying on a model assumption comparable to choosing a hypothesis class. 
We restrict our attention here to Hilbert spaces and provide the following contributions:
\begin{itemize}
    \item We develop a numerical framework for calculating the worst-case error in the case
    of finite-dimensional Hilbert spaces. In particular, we show that this error can be computed via a semidefinite program (Theorem \ref{computation_wce}). 
    \item We show that Optimal Recovery provides a formula which is user-friendly from an algorithmic point-of-view when the hypothesis class is a linear subspace (Theorem~\ref{Explicit_OR}). Interestingly, this formula coincides with kernel ridgeless regression in some cases (Theorem~\ref{arbitraryV}), proving that minimizing the average error and worst-case error can yield the same solution.
\end{itemize}
The theoretical findings
are verified through some numerical experiments presented in Section \ref{sec:simul}. 
%Instead, we think of the $\xb_i$'s as fixed once and for all and we cannot do anything about it...
%This is the first main difference between the standard Machine Learning viewpoint
%and the viewpoint considered here,
%which is the standard Optimal Recovery viewpoint.
%It actually implies a second main difference,
%having to do with the evaluation of the performance of the learning method.
%Indeed, without randomness, 
%an average-case analysis is not possible anymore,
%so we opt for a worst-case analysis.
%These adjustments are formalized in the next section.

\subsection{Why Optimal Recovery?} The theory of Optimal Recovery was developed in the 70's-80's as a subfield of Approximation Theory (see the surveys \cite{MicRiv,MicRiv2}). 
Its development was shaped by concurrent developments in the theory of spline functions (see e.g. \cite{de1963best,duchon1977splines}).
Splines provided a rare example where the theory integrated computations \cite{de1977computational}. 
But, at that time, algorithmic issues were not the high priority that they have become today and theoretical questions such as the existence of linear optimal algorithms prevailed (see e.g. the survey \cite{packel1988linear}). 
Arguably, this neglect hindered the development of the topic and this work can be seen as an attempt to promote an algorithmic framework that sheds light on similarities and differences between Optimal Recovery (in Hilbert spaces) and Statistical Learning. 
Incidentally, what is sometimes called the {\em spline algorithm} in Optimal Recovery has recently made a reappearance in Machine Learning circles as minimum-norm interpolation \cite{belkin2018understand,rakhlin2019consistency,Liang2018JustIK}, 
of course with a different motivation. 
We also remark that Optimal Recovery is not the only framework dealing with non-IID data. 
There are indeed other strands of Machine Learning literature (e.g. Online Learning \cite{hazan2016introduction} and Federated Learning \cite{zhao2018federated}) that investigate learning from non-IID and/or non-random data.

\subsection{Noisy observations.} 
A careful reader may wonder about the possibility of incorporating an error $e_i \in \R$ in the data $y_i = f_0(\xb_i)+e_i$, which is a common consideration in Machine Learning. 
We do not investigate such a scenario in this work, 
as our main focus is on 
%analyzing the worst-case error and
drawing interesting connections between Optimal Recovery and some of the common Supervised Learning techniques
in the simplest of settings first. 
Future works will concentrate on this inaccurate scenario
which, despite some existing results (see \cite{plaskota1996noisy,ettehad2020instances,beck2007regularization}),
presents some unsuspected subtleties.
%which is already well-defined and for which some results exists
For instance, the results from \cite{beck2007regularization}
are only valid in the complex setting and not in the real setting considered here.

\section{The Optimal Recovery Perspective}
In this section, we recall the general framework of Optimal Recovery and highlight some novel results, including the computation of worst-case error and the explicit formula of optimal recovery map. 

\subsection{The function space.}
Echoing the theory of Optimal Recovery,
we consider the function $f_0$ more abstractly as an element from a normed space $\Fc$. 
The output data $y_i$'s, 
which are evaluations of $f_0$ at the points $\xb_i$'s, 
can be generalized to linear functionals $\ell_i$'s applied to $f_0$,
so that the data take the form
\begin{equation}
y_i = \ell_i(f_0),
\qquad i \in [1:m].
\end{equation}
For convenience, we summarize these data as
\begin{equation}
\yb = L(f_0) = [\ell_{1}(f_0);\ldots;\ell_{m}(f_0)] \in \R^m,
\end{equation}
where the linear map $L:\Fc \to\R^{m}$ is called the observation operator.
Relevant situations include the case where $\Fc$ is the space $\mathcal{C}(\Omega)$ of continuous functions on $\Omega$,
which is equipped with the uniform norm,
and the case where $\Fc$ is a Hilbert space~$\Hilbert$,
which is equipped with the norm derived from its inner product.
It is the latter case that is the focus of this work.
More precisely, after recalling some known results,
we concentrate on a reproducing kernel Hilbert space $\Hilbert$ of functions defined on $\Omega$,
so that the point evaluations at the $\xb_i$'s are indeed well-defined and continuous linear functionals on $\Hilbert$. 

\subsection{The model set.}
Without further information,
data by themselves are not sufficient to say anything meaningful about~$f_0$. For example, one could think of all ways to fit a univariate function through points 
$(x_1,y_1),\ldots,(x_m,y_m) \in \R^2$
if no restriction is imposed.
Thus, a model assumption for the functions of interest is needed. 
This assumption takes the form
\begin{equation}
f_0 \in \K,
\end{equation}
where the model set $\K$ translates an educated belief about the behavior of realistic functions $f_0$.
In Optimal Recovery,
the set $\K$ is often chosen to be a convex and symmetric subset of $\Fc$.
Here, our relevant modeling assumption is the one that occurs implicitly in Machine Learning,
namely that the functions of interest are well-approximated by suitable hypothesis classes.
In this work, 
we only consider hypothesis classes that are linear subspaces $V$ of $\Fc$.
%---is this referred to as {\em linear models}?
Thus, given an approximation parameter $\epsilon > 0$ (the targeted approximation error),
our model set has the form
\begin{equation}
\label{modelset}
    \K := \{ f\in \Fc:\dist(f,V) \le \epsilon\},
\end{equation}
where $\dist(f,V) := \inf\{ \|f-v\|_{\Fc}, v \in V \}$.
In the case $\Fc = \Hilbert$ of a Hilbert space,
this model set reads
\begin{equation}
\label{modelsetH}
    \K = \{ f\in \Hilbert: \| f - P_V f \|_\Hilbert \le \epsilon\},
\end{equation}
where $P_Vf$ is the orthogonal projection of $f$ onto the subspace~$V$.
Such an approximability set was
put forward in \cite{binev2017data},
with motivation coming from parametric PDEs. 
 When working with this model,
 it is implicitly assumed that 
\begin{equation}
\label{ImpAssump}
    V \cap \ker(L) = \{0\},
\end{equation}
otherwise the existence of a nonzero $v \in V \cap \ker(L)$
would imply that each $f_t := f_0 + t v$, $t \in \R$,
is both data-consistent ($L(f_t) = \yb$)
and model-consistent ($f_t \in \K$),
leading to infinite worst-case error by letting $t \to \infty$.
By a dimension argument,
the assumption \eqref{ImpAssump} forces 
\begin{equation}\label{n<m}
n:= \dim(V) \le m,
\end{equation}
i.e., we must place ourselves in 
an underparametrized regime where there are less model parameters than datapoints.
To make sense of the overparametrized regime,
the model set \eqref{modelset} would need to be refined by adding some boundedness conditions,
see \cite{foucart2020instances} for results in this direction.

\subsection{Worst-case errors.}
With the model set in place, we now need to assess the performance of a learning/recovery map,
which is just a map taking data $\yb \in \R^m$ as input and returning an element $\hat{f} \in \Fc$ as output.
Given a model set $\K$, 
the local worst-case error of such a map $R: \R^m \to \Fc$ at $\yb \in \R^m$ is
\begin{equation}
\label{locerr}
{\rm err}^{\rm loc}_\K(L,R(\yb))
:= \sup_{f \in \K, L(f)=\yb} \|f - R(\yb)\|_\Fc.
\end{equation}
The global worst-case error is the worst local worst-case error over all $\yb \in \R^m$
that can be obtained by observing some $f \in \K$,
i.e.,
\begin{equation}
\label{gloerr}
{\rm err}^{\rm glo}_\K(L,R)
:= \sup_{f \in \K} \|f - R(L(f))\|_\Fc.
\end{equation}
A learning/recovery map $R: \R^m \to \Fc$ is called locally, respectively globally,
optimal if it minimizes the local, respectively global, worst-case error.
These definitions can be extended to handle not only the full recovery of $f_0$ 
but also the recovery of a quantity of interest $Q(f_0)$.
That is,
for a map $Q: \Fc \to Z$ from $\Fc$ into another normed space $Z$,
one would define e.g. the global worst-case error of the learning/recovery map $R: \R^m \to Z$ as
\begin{equation}
{\rm err}^{\rm glo}_{\K,Q}(L,R)
:= \sup_{f \in \K} \|Q(f) - R(L(f))\|_Z.
\end{equation}
Such a framework is pertinent even if we target the full recovery of $f_0$
but with performance evaluated in a norm $\sslash \cdot \sslash_\Fc$ different from the native norm $\|\cdot\|_\Fc$,
as we can consider $Q$ to be the identity map from $\Fc$ equipped with $\|\cdot\|_\Fc$ into $Z=\Fc$ equipped with $\sslash \cdot \sslash_\Fc$.

Perhaps counterintuitively,
dealing with the global setting is somewhat  easier than dealing with the local setting,
in the sense that globally optimal maps have been obtained in situations where locally optimal maps have not,
e.g. when $\Fc = \mathcal{C}(\Omega)$.
Accordingly, it is the local setting which is the focus of this work.

\subsection{Computation of local worst-case errors.}
When $\Fc = \Hilbert$ is a Hilbert space
and the approximability model \eqref{modelsetH} is selected,
determining the local worst-case error of a given map $R: \R^m \to \Hilbert$ at some $\yb$ involves solving 
\begin{equation}
\label{WCEinfdim}
\maximize{f \in \Hilbert} \|f-R(\yb)\|_\Hilbert
\quad \mbox{s.to } 
\begin{cases}
\|f-P_Vf\|_\Hilbert \le \epsilon, \\
L(f)=\yb.   
\end{cases}
\end{equation}
This is a nonconvex optimization program,
and as such does appear hard to solve at first sight.
However,
it is a quadratically constrained quadratic program,
hence it is possible to solve it exactly.
Although Gurobi~\cite{gurobi} now features direct capabilities to solve quadratically constrained quadratic programs,
we take the route of recasting \eqref{WCEinfdim} as a semidefinite program using the S-lemma \cite{polik2007survey}.
%as explained in the supplementary material.
The solution of the recast program can then be obtained using an off-the-shelf semidefinite solver,
at least when the dimension $N = \dim(\Hilbert)$ of the Hilbert~$\Hilbert$ space is finite.
Precisely,
with $(h_1,\ldots,h_N)$ denoting an orthonormal basis for $\Hilbert$ chosen in such a way that $(h_1,\ldots,h_{N-m})$ is an orthonormal basis for $\ker(L)$
and with $H$ denoting the unitary map $x \in \R^{N-m} \mapsto \sum_{k=1}^{N-m} x_k h_k \in \ker(L)$,
local worst-case errors can be computed based on the following observation.

%We shall now prove the statement made earlier that the local worst-case error \eqref{locerr}
%can be obtained after solving a semi-definite program.

\begin{theorem}
\label{computation_wce}
The local worst-case error of a learning/recovery map $R: \R^m \to \Hilbert$ at $\yb \in \R^m$
under the model set \eqref{modelset}
can be expressed,
with $g := R(\yb)$, as
\begin{align}
\label{expression_wce}
    e_{\K}^{\rm loc}&(L,g) = \\
    \nonumber
    &
    \left[
    \|h-P_{\ker(L)^\perp}(g)\|_\Hilbert^2 
    + \| P_{\ker(L)}(g) \|_\Hilbert^2
    + c^\star
    \right]^{1/2},
\end{align}
where $h$ is the unique element in $\ker(L)^\perp$ satisfying $L(h)=\yb$
and $c^\star$ is the minimal value of the following program,
in which $w := P_{V^\perp}(h)$:
\begin{align}
\label{semidef_minimization}
    & \minimize{c,d \in \R} \quad c
    \quad \mbox{\rm s.to } d \ge 0 
    \quad \mbox{\rm and }\\
    \nonumber
    & \begin{bmatrix}
    H^*(d P_{V^\perp} - I_\Hilbert)H 
    & | & H^*(d w + P_{\ker(L)}(g))\\
    \hline
    (d w + P_{\ker(L)}(g))^*H
    & | & c + d (\|w\|_\Hilbert^2 - \epsilon^2)
    \end{bmatrix}
    \succeq 0.
\end{align} 
\end{theorem}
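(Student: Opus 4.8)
The plan is to turn the infinite-dimensional, nonconvex program \eqref{WCEinfdim} (whose value is $e_{\K}^{\rm loc}(L,g)$) into a finite-dimensional quadratically constrained quadratic program over $\R^{N-m}$, and then to invoke the S-lemma \cite{polik2007survey}, which yields a lossless semidefinite reformulation precisely because only a single quadratic constraint is present. Throughout I would use that maximizing $\|f-g\|_\Hilbert$ is the same as maximizing its square, together with the elementary identities $H^*H = I$, $HH^* = P_{\ker(L)}$, and $H^*|_{\ker(L)^\perp} = 0$.

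First I would eliminate the data constraint $L(f)=\yb$: every data-consistent $f$ is uniquely $f = h + Hx$, with $h$ the element of $\ker(L)^\perp$ satisfying $L(h)=\yb$ and $x \in \R^{N-m}$ arbitrary (so that $Hx$ sweeps out $\ker(L)$). Splitting along $\Hilbert = \ker(L)^\perp \oplus \ker(L)$ gives $\|f-g\|_\Hilbert^2 = \|h - P_{\ker(L)^\perp}(g)\|_\Hilbert^2 + \|Hx - P_{\ker(L)}(g)\|_\Hilbert^2$, and since $H$ is an isometry and $P_{\ker(L)}(g) = H(H^*g)$, the second term equals $\|x - H^*g\|^2 = \|x\|^2 - 2\langle H^*g, x\rangle + \|P_{\ker(L)}(g)\|_\Hilbert^2$. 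Writing $w := P_{V^\perp}(h)$, the model constraint reads $\|f - P_Vf\|_\Hilbert^2 = \|w + P_{V^\perp}Hx\|_\Hilbert^2 = \langle x, H^*P_{V^\perp}Hx\rangle + 2\langle H^*w, x\rangle + \|w\|_\Hilbert^2 \le \epsilon^2$. Hence, with $q_0(x) := \|x\|^2 - 2\langle H^*g, x\rangle$ and $q_1(x) := \langle x, H^*P_{V^\perp}Hx\rangle + 2\langle H^*w, x\rangle + \|w\|_\Hilbert^2 - \epsilon^2$, one gets $\bigl(e_{\K}^{\rm loc}(L,g)\bigr)^2 = \|h - P_{\ker(L)^\perp}(g)\|_\Hilbert^2 + \|P_{\ker(L)}(g)\|_\Hilbert^2 + \max\{q_0(x) : q_1(x) \le 0\}$; comparing with \eqref{expression_wce}, it remains to show that $c^\star$, the optimal value of \eqref{semidef_minimization}, equals $\max\{q_0(x): q_1(x) \le 0\}$.

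Next I would check that this maximum is well-posed. Any $x$ with $H^*P_{V^\perp}Hx = 0$ has $P_{V^\perp}Hx = 0$, so $Hx \in V \cap \ker(L) = \{0\}$ by \eqref{ImpAssump}, forcing $x=0$; thus $H^*P_{V^\perp}H \succ 0$ and $\{q_1 \le 0\}$ is a bounded ellipsoid, nonempty under the standing assumption that the data is consistent with the model, with nonempty interior (so the Slater condition for the S-lemma holds) and on which $q_0$ attains a finite maximum. The S-lemma then states that, for a scalar $c$, the implication $[\,q_1(x)\le 0 \Rightarrow q_0(x)\le c\,]$ is equivalent to the existence of $d \ge 0$ with $q_0(x) \le c + d\,q_1(x)$ for every $x \in \R^{N-m}$; the latter asserts nonnegativity of the quadratic form
\begin{equation*}
x \longmapsto \langle x, H^*(dP_{V^\perp}-I_\Hilbert)Hx\rangle + 2\langle H^*(dw + P_{\ker(L)}(g)), x\rangle + c + d(\|w\|_\Hilbert^2 - \epsilon^2),
\end{equation*}
where I used $H^*g = H^*P_{\ker(L)}(g)$ and $dH^*P_{V^\perp}H - I = H^*(dP_{V^\perp}-I_\Hilbert)H$, and this is exactly positive semidefiniteness of the matrix appearing in \eqref{semidef_minimization}. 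Since by the S-lemma the least such $c$ is attained and equals $\max\{q_0(x):q_1(x)\le 0\}$, that maximum coincides with $c^\star$, and \eqref{expression_wce} follows.

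The main obstacle is bookkeeping rather than depth: one must track the four projections $P_{\ker(L)}$, $P_{\ker(L)^\perp}$, $P_V$, $P_{V^\perp}$ together with the isometry $H$ and its adjoint — relying on $H^*H=I$, $HH^*=P_{\ker(L)}$, and $H^*|_{\ker(L)^\perp}=0$ — so as to match the constants in \eqref{expression_wce} and the precise entries of the matrix in \eqref{semidef_minimization}. A secondary point worth a sentence is the degenerate case where the feasible ellipsoid is empty or reduces to a single point (data on the boundary of $L(\K)$), in which Slater fails; this can be handled directly or by a small perturbation of $\epsilon$.
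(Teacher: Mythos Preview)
Your proposal is correct and follows essentially the same route as the paper: eliminate the data constraint by writing $f=h+Hx$, split $\|f-g\|_\Hilbert^2$ along $\ker(L)\oplus\ker(L)^\perp$ to isolate the two additive constants in \eqref{expression_wce}, rewrite the model constraint as a single quadratic inequality in $x$, and invoke the S-lemma to obtain the semidefinite constraint in \eqref{semidef_minimization}. Your added remarks on well-posedness (the positive definiteness of $H^*P_{V^\perp}H$ via \eqref{ImpAssump} and the Slater condition) go slightly beyond what the paper spells out, but the argument is otherwise the same.
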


%\subsection{Proof of Theorem \ref{computation_wce}}
\begin{proof}
We first justify the claim that there exists a unique $h\in\ker(L)^\perp$ such that $L(h)=\yb\in\R^m$. 
To see this, define the linear 
map $\Tilde{L} : h \in \ker(L)^\perp \mapsto L(h) \in {\rm range}(L)$.
Since $\ker(\Tilde{L}) = \ker(L)\cap\ker(L)^\perp=\{0\}$,
the map $\Tilde{L}$ is injective. Therefore, we have $\dim({\rm range}(\Tilde{L}))
= \dim( \ker(L)^\perp )$,
which equals $N - \dim(\ker(L)) = \dim({\rm range}(L))$
by the rank-nullity theorem,
so the map $\Tilde{L}$ is also surjective.
Thus, the claim is justified by the fact that $\Tilde{L}$ is bijective.

Next, the squared local worst-case error \eqref{locerr}
at $g = R(\yb)$ is
\begin{align}
\label{wce^2}
\big[ {\rm err}_\mathcal{K}^{\rm loc}&(L,g) \big]^2 =\\
\nonumber
    & \sup_{f\in\Hilbert}
    \big\{ \|f-g\|_\Hilbert^{2} :  \|P_{V^\perp}f\|_\Hilbert^2\leq \epsilon^2, L(f)=\yb
    \big\}.
\end{align}
Decomposing $f$ and $g$ as $f=f'+f''$ and $g=g'+g''$
with $f',g'\in\ker(L)$ and $f'',g''\in\ker(L)^\perp$, 
the condition $L(f)=\yb$ reduces to $L(f'')=\yb$, i.e., $f''=h$ is uniquely determined.  
The condition $\|P_{V^\perp}f\|_\Hilbert^2\leq \epsilon^2$
then becomes $\|P_{V^\perp}f'+w\|_\Hilbert^2\leq\epsilon^2$.
As for the expression to maximize, it separates into
\begin{align}
\label{new_obj}
\|f-g\|_\Hilbert^2 & =    \|f''-g''\|_\Hilbert^2 + \|f'-g'\|_\Hilbert^2 \\ \nonumber
& = \|h-g''\|_\Hilbert^2 + \|g'\|_\Hilbert^{2} + \|f'\|_\Hilbert^2-2\langle f',g'\rangle.
\end{align}
Up to the additive constant $\|h-g''\|_\Hilbert^2 + \|g'\|_\Hilbert^{2}$,
the maximum in \eqref{wce^2} is now
\begin{align}
\label{new_optimization}
   & \sup_{f'\in\ker(L)} \|f'\|_\Hilbert^2-2\langle f',g'\rangle \; \mbox{s.to } \|P_{V^\perp}f'+w\|_\Hilbert^2\leq\epsilon^2\\
  \nonumber
 & = \inf_{c\in\R} c \; \mbox{s.to }  \|f'\|_\Hilbert^2\hspace{-1mm}-\hspace{-1mm}2\langle f',g'\rangle \hspace{-1mm}\leq \hspace{-1mm} c 
 \mbox{ when } \|P_{V^\perp}f'\hspace{-1mm}+\hspace{-1mm}w\|_\Hilbert^2 \hspace{-1mm} \leq \hspace{-1mm} \epsilon^2.
 \end{align}
Writing $f'=Hx$ with $x\in\R^{N-m}$,
this latter constraint reads
\begin{align}
\label{inf_contraints}
    & c-\big(\langle Hx,Hx\rangle-2\langle Hx,g'\rangle\big) \geq 0  \nonumber
    \end{align}
    whenever
    $$\epsilon^2-\big(\langle P_{V^\perp}Hx,P_{V^\perp}Hx \rangle +2\langle P_{V^\perp}Hx,w\rangle + \|w\|_\Hilbert^2\big)\geq 0.$$
By the S-lemma, see e.g. \cite{polik2007survey},  \eqref{inf_contraints} is equivalent to the existence of $d \ge 0$ such that
\begin{align}
    & c-\big(\langle Hx,Hx\rangle-2\langle Hx,g'\rangle\big) \geq   \\ \nonumber
    & d \, \big[\epsilon^2-\big(\langle P_{V^\perp}Hx,P_{V^\perp}Hx \rangle +2\langle P_{V^\perp}Hx,w\rangle+\|w\|_\Hilbert^2\big)\big]
\end{align}
for all $x \in \R^{N-m}$,
or in other words,
to the existence of $d \ge 0$ such that
\begin{align}
    & \big( d \langle x, (H^* P_{V^\perp} H)x \rangle - 
    \langle x, H^*Hx \rangle\big)
    \\ \nonumber 
    & + 2 \big( d \langle x,H^* w \rangle+ \langle x, H^* g' \rangle \big)  + c + d(\|w\|_\Hilbert^2 - \epsilon^2) \ge 0
\end{align}
for all $x \in \R^{N-m}$.
This constraint can be reformulated as a semidefinite constraint
\begin{equation}
    \begin{bmatrix}
    dH^*P_{V^\perp}H - H^*H 
    & | & d H^* w + H^* g'\\
    \hline
    (d H^* w + H^* g')^*
    & | & c + d (\|w\|_\Hilbert^2 - \epsilon^2)
    \end{bmatrix}
    \succeq 0.
\end{equation}
Keeping in mind that $g' = P_{\ker(L)}g$, this is the semidefinite constraint appearing in \eqref{semidef_minimization}.
Putting everything together,
we arrive at the expression for the local worst-case error announced in \eqref{expression_wce}.
\end{proof}

\subsection{Optimal learning/recovery map.}
Even though it is possible to compute the minimal worst-case error via \eqref{expression_wce}-\eqref{semidef_minimization},
optimizing over $g \in \Hilbert$ to produce the locally optimal recovery map would still require some work and would in fact be a major overkill.
Indeed, for our situation of interest,
some crucial work in this direction has been carried out in~\cite{binev2017data},
and we rely on it to derive the announced user-friendly formula for the optimal recovery map $R^{\rm opt}$.
Precisely, 
when $\Fc = \Hilbert$ is a (finite- or infinite-dimensional) Hilbert space and the model set $\K$ is given by \eqref{modelsetH},
it was shown in \cite{binev2017data}  that,
for any input $\yb \in \R^m$,
the output $R^{\rm opt}(\yb) \in \Hilbert$ is the solution $\hat{f}$ to the convex minimization program
\begin{equation}
\label{Hilbert_minimization}
\minimize{f \in \Hilbert} \|f - P_V f\|_\Hilbert
\qquad \mbox{subject to } L(f)=\yb.
\end{equation}
We generalize this result through Theorem \ref{ThmGenBinevEtAl} in the appendix.
It suffices to say for now that
the argument of \cite{binev2017data}, based on the original expression \eqref{locerr} of the worst-case error, exploits the fact that $\hat{f} - P_V\hat{f}$ is orthogonal not only to $V$ but also to $\ker(L)$.
Let us point out that $R^{\rm opt}(\yb) = \hat{f}$ is both data-consistent and model-consistent when $\yb=L(f_0)$ for some $f_0 \in \K$. 
%Data-consistent means $L(R^{opt}(\yb))=\yb$ and model-consistent means $R^{opt}\in\Kc$.
It is also interesting to note that the optimal recovery map $R^{\rm opt}$ does not depend on the approximation parameter $\epsilon$.
This peculiarity disappears as soon as observation errors are taken into consideration,
see~\cite{ettehad2020instances}.

A computable expression for the minimal local error \eqref{locerr},
and in turn for the minimal global error \eqref{gloerr},
has also been given in~\cite{binev2017data}.
Without going into details,
we only want to mention that the
latter decouples as the product $\mu \times \epsilon$ of an indicator~$\mu$
of compatibility between model and datapoints,
which increases as the space~$V$ is enlarged,
and of the parameter~$\epsilon$ of approximability, 
which decreases as the space~$V$ is enlarged.
Thus, the choice of a space~$V$ yielding small minimal worst-case errors involves a trade-off on $n = \dim(V)$.
This trade-off is illustrated numerically in Subsection~\ref{CompTE}. 

Although the description given by of the optimal learning/recovery map is quite informative,
it fails to make apparent the fact the map $R^{\rm opt}$
is actually a linear map.
This fact can be seen from the theorem below,
which states that solving a minimization program for each $\yb \in \R^m$ 
is not needed to produce $R^{\rm opt}(\yb)$.
Indeed, one can obtain $R^{\rm opt}(\yb)$ by some linear algebra computations involving two matrices which are more or less directly available to the learner.
To define these matrices,
we need the Riesz representers $u_i \in \Hilbert$ of the linear functionals $\ell_{i} \in \Hilbert^*$,
which are characterized by 
$$
\ell_i(f) = \langle u_i, f \rangle
\qquad \mbox{for all }f \in \Hilbert.
$$
We also need a (not necessarily orthonormal) basis $(v_1,\ldots,v_n)$ for $V$.
The two matrices are the Gramian $\Gb \in \R^{m \times m}$ of $(u_1,\ldots,u_m)$ and the cross-Gramian $\Cb \in \R^{m \times n}$ of $(u_1,\ldots,u_m)$
and $(v_1,\ldots,v_n)$.
Their entries are given, for $i,i'\in[1:m]$ and $j\in[1:n]$, by
\begin{align}
\label{Gram}
        (\Gb)_{i,i'} &= \langle u_{i},u_{i'}\rangle = \ell_{i}(u_{i'}), \\
        \label{CrossGram}
        \Cb_{i,j} &= \, \langle u_{i},v_{j}\rangle \, = \ell_{i}(v_{j}).
\end{align}
The matrix $\Gb$ is positive definite and in particular invertible 
(linear independence of the $\ell_i$'s is assumed). 
The matrix $\Cb$ has full rank thanks to the assumption $V\cap\ker(L)=\{0\}$.
The result below
%(proved in the supplementary material)
%\cite{foucartDS}, 
shows that the output of the optimal learning/recovery map does not have to lie in the space $V$ 
(the hypothesis class),
as opposed to the output of algorithms such as empirical risk minimizations. 

\begin{theorem}
\label{Explicit_OR}
The locally optimal learning/recovery map $R^{\rm opt}:~\R^m~\to~\Hilbert$
is given in closed form for each $\yb \in \R^m$ by
\begin{equation}\label{ORmap}
    R^{\rm opt}(\yb) = \sum_{i=1}^{m}a_{i}u_{i} + \sum_{j=1}^{n}b_{j}v_{j}, 
\end{equation}
where the coefficient vectors $\ab\in\R^{m}$ and $\bb\in\R^{n}$ are computed as 
\begin{align}
\label{coef_b}
 \bb & = (\Cb^\top \Gb^{-1} \Cb)^{-1} \Cb^\top \Gb^{-1} \yb,\\
 \label{coef_a}
 \ab & = \Gb^{-1}(\yb-\Cb\bb).
\end{align}
\end{theorem}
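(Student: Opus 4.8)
The plan is to start from the characterization of $R^{\rm opt}(\yb) = \hat f$ as the solution of the convex program \eqref{Hilbert_minimization} and to invoke the structural fact recalled above (and generalized in Theorem \ref{ThmGenBinevEtAl}): the minimizer $\hat f$ is the unique element of $\Hilbert$ satisfying simultaneously $L(\hat f) = \yb$ and $\hat f - P_V \hat f \perp \ker(L)$. Since the Riesz representers satisfy $\ker(L) = \vspan(u_1,\ldots,u_m)^\perp$, equivalently $\ker(L)^\perp = \vspan(u_1,\ldots,u_m)$ (the span being finite-dimensional, hence closed, so this identity holds even in the infinite-dimensional case), the orthogonality condition says precisely that $\hat f - P_V \hat f$ lies in $\vspan(u_1,\ldots,u_m)$.

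Next I would write $\hat f = (\hat f - P_V \hat f) + P_V \hat f$. By the previous step the first summand equals $\sum_{i=1}^m a_i u_i$ for some $\ab \in \R^m$, while the second summand lies in $V$ and hence equals $\sum_{j=1}^n b_j v_j$ for some $\bb \in \R^n$; this already yields the announced form \eqref{ORmap}. It then remains to pin down $\ab$ and $\bb$ from two available conditions. First, $\sum_i a_i u_i = \hat f - P_V \hat f$ is orthogonal to $V$, which, tested against each $v_j$ and rewritten via \eqref{CrossGram}, reads $\Cb^\top \ab = 0$. Second, the data-consistency $\ell_k(\hat f) = y_k$ for $k \in [1:m]$, after expanding $\ell_k(\hat f) = \langle u_k, \hat f \rangle$ using \eqref{ORmap}, \eqref{Gram} and \eqref{CrossGram}, gives $\Gb \ab + \Cb \bb = \yb$.

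It then suffices to solve this linear system. From $\Gb \ab + \Cb \bb = \yb$ and the invertibility of $\Gb$ (a positive definite Gramian) we obtain $\ab = \Gb^{-1}(\yb - \Cb \bb)$, which is \eqref{coef_a}; substituting into $\Cb^\top \ab = 0$ yields $\Cb^\top \Gb^{-1} \Cb \, \bb = \Cb^\top \Gb^{-1} \yb$, and since $\Cb$ has full column rank and $\Gb^{-1}$ is positive definite, $\Cb^\top \Gb^{-1} \Cb$ is positive definite, hence invertible, giving \eqref{coef_b}. To close the argument I would note that this linear system has a unique solution (if $\Gb \ab + \Cb \bb = 0$ and $\Cb^\top \ab = 0$, then $\bb = 0$ and $\ab = 0$ by the same positive-definiteness), so the pair produced above is the only candidate; conversely the minimizer $\hat f$, which exists and is unique by \eqref{ImpAssump} and the cited results, does supply a solution of this system, so \eqref{ORmap} with \eqref{coef_b}–\eqref{coef_a} genuinely reproduces $R^{\rm opt}(\yb)$.

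The main obstacle — indeed the only step beyond bookkeeping — is the appeal to the variational characterization of $\hat f$ through the double orthogonality $\hat f - P_V \hat f \perp V$ and $\hat f - P_V \hat f \perp \ker(L)$; once this is granted, the remainder is finite-dimensional linear algebra resting on the invertibility of $\Gb$ and of $\Cb^\top \Gb^{-1} \Cb$. Since that characterization is exactly what \cite{binev2017data} and Theorem \ref{ThmGenBinevEtAl} provide, the proof is short, and the only genuine care needed is to keep straight which of the two orthogonality relations produces which of the two matrix equations.
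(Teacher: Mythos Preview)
Your proposal is correct and follows essentially the same route as the paper: both start from the variational characterization $\hat f - P_V \hat f \in \ker(L)^\perp = \vspan\{u_1,\ldots,u_m\}$, derive the two equations $\Cb^\top \ab = 0$ and $\Gb \ab + \Cb \bb = \yb$ by testing against the $v_j$'s and the $u_i$'s respectively, and then solve. Your only addition is a short explicit justification of the invertibility of $\Cb^\top \Gb^{-1}\Cb$ and of the uniqueness of $(\ab,\bb)$, which the paper leaves implicit.
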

%Recalling from \eqref{n<m} that $n\leq m$, 
%the time cost of calculating the coefficient vectors $\ab$ and $\bb$ is $\mathcal{O}(m^3)$.

%\subsection{Proof of Theorem \ref{Explicit_OR}}
\begin{proof}
Let $\hat{f} = R^{\rm opt}(\yb)$ be the solution to \eqref{Hilbert_minimization}. 
We point out 
(as already mentioned or as a special case of \eqref{OrthProp}) that $\hat{f}-P_{V}\hat{f}$ is orthogonal to the space $\ker(L)$.
This property completely characterizes $\hat{f}$ as the element given by \eqref{ORmap}.
Indeed,
%We shall first recall the argument explaining that $\hat{f}-P_{V}\hat{f}$ is orthogonal to $\ker(L)$
%before showing that this orthogonality condition,
%together with the condition $L(f)=\yb$,
%characterizes $\hat{f}$ uniquely as the element given in \eqref{ORmap}.
%For the orthogonality condition,
%consider any $u\in\ker(L)$ and any $t\in\R$, and notice that the expression
%\begin{align}
%    & \|\hat{f}+tu-P_{V}(\hat{f}+tu)\|_\Hilbert^2 \\ \nonumber 
%    = & \|\hat{f}-P_{V}\hat{f}\|_\Hilbert^2+2t\langle \hat{f}-P_{V}\hat{f},u-P_{V}u\rangle +\mathcal{O}(t^2),
%\end{align}
%is minimized when $t=0$. 
%This forces $\langle \hat{f}-P_{V}\hat{f},u-P_{V}u\rangle=0$,
%and since $\hat{f}-P_{V}\hat{f}$ is orthogonal to $V$,  
%we have $\langle \hat{f}-P_{V}\hat{f},u\rangle=0$ for all $u\in\ker(L)$, as required. 
in view of $\ker(L)^\perp = \vspan\{ u_1,\ldots,u_m \}$, 
we have
\begin{equation}
    \hat{f} - P_{V}\hat{f} = \sum_{i=1}^{m} a_{i}u_{i} \qquad \mbox{for some } \ab \in\R^m. 
\end{equation}
Taking inner product with $v_{1},\dots,v_{n}$ leads to ${\bf 0}=\Cb^\top \ab$. 
Then, expanding $P_{V}\hat{f}$ on $(v_{1},\dots,v_{n})$, we obtain
\begin{equation}
    \hat{f} = \sum_{i=1}^{m}a_{i}u_{i} + \sum_{j=1}^{n}b_{j}v_{j} \qquad \mbox{for some } \bb\in\R^n. 
\end{equation}
Taking inner product with $u_{1},\dots,u_{m}$ leads to $\yb=\Gb\ab+\Cb\bb$ and in turn to $\Cb^\top \Gb^{-1}\yb = \Cb^\top \Gb^{-1}\Cb \bb$ after multiplying by $\Cb^\top \Gb^{-1}$. 
The latter yields the expression for $\bb$ given in \eqref{coef_b}, while the former yields the expression for $\ab$ given in \eqref{coef_a}.
\end{proof}

%\liao{ {\bf Remark 1.} The same algebraic formula is obtained in \cite{vito2004some} but the difference is that we start from solving a worst-case error problem, which means that minimizing the average error and worst-case error can yield the same solution in some cases.}

\section{Relation to Supervised Learning}
Supervised learning algorithms take data $\yb \in \R^m$ as input 
(while also being aware of the $\xb_i$'s)
and return functions $\hat{f} \in \Hilbert$ as outputs,
so they can be viewed as learning/recovery maps $R: \R^m \to \Hilbert$.
We examine below how some of them compare to the map $R^{\rm opt}$ from Theorem \ref{Explicit_OR}.

\subsection{Empirical risk minimizations.}
By design, the outputs $\hat{f}$ returned by these algorithms belong to a hypothesis space chosen in advance from the belief that it provides good approximants for real-life functions.
Since this implicit belief parallels
%corresponds to 
the explicit assumption expressed by the model set \eqref{modelset},
our Optimal Recovery algorithm and empirical risk minimization algorithms are directly comparable,
in that they both depend on a common approximation space/hypothesis class $V$.
%(square loss, absolute loss, logistic loss, ..)
With a loss function chosen as a $p$th power of an $\ell_p$-norm for $p \in [1,\infty]$,
empirical risk minimization algorithms consist in solving the convex optimization program
\begin{equation}
\label{ERM}
    \minimize{f \in \Hilbert}
    \|\yb - L(f)\|_p^p
    = \sum_{i=1}^m |y_i - \ell_i(f)|^p
    \quad \mbox{s.to }
    f \in V.
\end{equation}
In the case $p=2$ of the square loss, the solution actually reads
\begin{equation}\label{Rerm}
    R^{{\rm erm}_2}(\yb)
    = \sum_{j=1}^n \left( (\Cb^\top \Cb)^{-1} \Cb^\top \yb \right)_j v_j,
\end{equation}
where the matrix $\Cb \in \R^{m \times n}$ still represents the cross-Gramian introduced in \eqref{CrossGram}.

\subsection{Kernel regressions.}
%\label{sec:kernelreg}
Kernel regression algorithms usually operate in the setting of Reproducing Kernel Hilbert Spaces 
(see next section),
but they can be phrased for arbitrary Hilbert spaces, too.
For instance,
the traditional kernel ridge regression consists in solving the following convex optimization problem
\begin{equation} 
\label{KReg}
    \minimize{f\in \Hilbert} \sum_{i=1}^m (y_i - \ell_i(f))^2 + \gamma \|f\|_{\Hilbert}^2
\end{equation}
for some parameter $\gamma > 0$.
In the limit $\gamma \to 0$,
one obtains kernel ridgeless regression, 
which consists in solving the convex optimization problem 
\begin{equation}
\label{KRG_Min}
    \minimize{f\in \Hilbert} \|f\|_{\Hilbert}
    \qquad \mbox{s.to }  \ell_i(f)=y_i,
    \quad i \in [1:m].
\end{equation}
This algorithm fits the training data perfectly
and is also known to generalize well \cite{Liang2018JustIK}.

The crucial observation we wish to bring forward here is that kernel ridgeless regression,
although not designed with this intention,
is also an Optimal Recovery method.
Indeed, \eqref{KRG_Min} appears as the special case of the convex optimization program \eqref{Hilbert_minimization} with the choice $V=\{0\}$.
Using Theorem \ref{Explicit_OR},
we can retrieve in particular that kernel ridgeless regression is explicitly given by
\begin{equation}\label{ridgeless}
    R^{\rm ridgeless}(\yb) = \sum_{i=1}^m \left( \Gb^{-1} \yb \right)_i u_i.
\end{equation}
Incidentally, the latter can also be interpreted as the special case $V = \vspan\{u_1,\ldots,u_m\}$,
since $\hat{f} = R^{\rm ridgeless}(\yb)$ is a linear combination of the Riesz representers $u_1,\ldots,u_m$ that satisfy the observation constraint $L(\hat{f})=\yb$.
In fact, there are  more choices for $V$ that leads to kernel ridgeless regression, 
as revealed below.

\begin{theorem}
\label{arbitraryV}
If the space is
$V = \vspan\{ u_i, i \in I\}$
for some subset $I$ of $[1:m]$, then the locally optimal recovery map \eqref{Hilbert_minimization} reduces to kernel ridgeless regression independently of $I$.
\end{theorem}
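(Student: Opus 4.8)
The plan is to show that when $V = \operatorname{span}\{u_i : i \in I\}$, the closed-form solution from Theorem~\ref{Explicit_OR} collapses to \eqref{ridgeless}. The key point is that the extra summand $\sum_{j} b_j v_j$ in \eqref{ORmap} must vanish, since each $v_j$ is itself one of the Riesz representers $u_i$ and hence already lies in $\operatorname{span}\{u_1,\dots,u_m\}$; the real content is that the combined coefficient vector on $(u_1,\dots,u_m)$ equals $\Gb^{-1}\yb$.

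First I would fix a concrete basis for $V$: take $(v_1,\dots,v_n)$ to be $(u_i)_{i \in I}$, so that $n = |I|$. With this choice the cross-Gramian $\Cb$ has entries $\Cb_{i,j} = \langle u_i, u_{i_j}\rangle$ where $I = \{i_1,\dots,i_n\}$; in other words, $\Cb = \Gb S$, where $S \in \R^{m \times n}$ is the column-selection matrix picking out the columns of $\Gb$ indexed by $I$ (equivalently $\Cb = \Gb S$ with $S^\top$ the restriction operator $\R^m \to \R^n$). Substituting $\Cb = \Gb S$ into \eqref{coef_b} and \eqref{coef_a}, and using that $\Gb$ is symmetric positive definite, I get $\Cb^\top \Gb^{-1} = S^\top \Gb$ cancels to $S^\top$ on the relevant factor: precisely $\Cb^\top \Gb^{-1} \Cb = S^\top \Gb S$ and $\Cb^\top \Gb^{-1}\yb = S^\top \yb$, so $\bb = (S^\top \Gb S)^{-1} S^\top \yb$ and $\ab = \Gb^{-1}\yb - S\bb$.

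Next I would assemble the output. Since $v_j = u_{i_j}$, the term $\sum_j b_j v_j = \sum_j b_j u_{i_j}$ can be absorbed into the first sum by defining an augmented coefficient vector $\tilde{\ab} = \ab + S\bb \in \R^m$ on $(u_1,\dots,u_m)$. Then $\tilde{\ab} = \Gb^{-1}\yb - S\bb + S\bb = \Gb^{-1}\yb$, so $R^{\rm opt}(\yb) = \sum_{i=1}^m (\Gb^{-1}\yb)_i u_i = R^{\rm ridgeless}(\yb)$, independently of $I$. One should check that the invertibility hypotheses of Theorem~\ref{Explicit_OR} hold here — $\Cb$ has full rank because $S$ does and $\Gb$ is invertible, which matches the earlier remark that $V \cap \ker(L) = \{0\}$; indeed $V \subseteq \operatorname{span}\{u_1,\dots,u_m\} = \ker(L)^\perp$ already implies this.

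I expect the only mild obstacle to be bookkeeping around the selection matrix $S$ and making sure the algebra $\Cb^\top \Gb^{-1}\Cb = S^\top \Gb S$ (rather than $S^\top \Gb^{-1} S$ or similar) is written cleanly; the conceptual step — that $V \subseteq \ker(L)^\perp$ forces the $V$-correction to be redundant — is immediate once the basis is chosen as the $u_i$'s themselves. An alternative, perhaps even shorter, route would be to argue directly from the variational characterization \eqref{Hilbert_minimization}: since $\hat{f} - P_V\hat{f} \perp \ker(L)$ and $\hat{f}$ is data-consistent, and since $V \subseteq \ker(L)^\perp$, one can show $\hat{f} \in \ker(L)^\perp$, whence $\|f - P_V f\|_\Hilbert$ and $\|f\|_\Hilbert$ are minimized by the same element; I would mention this as a remark but carry out the matrix computation as the main proof since it is fully explicit.
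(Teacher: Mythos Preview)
Your matrix computation is correct and complete: with $\Cb=\Gb S$ you obtain $\Cb^\top\Gb^{-1}=S^\top$, hence $\bb=(S^\top\Gb S)^{-1}S^\top\yb$ and $\ab=\Gb^{-1}\yb-S\bb$, and absorbing $\sum_j b_j u_{i_j}$ into the first sum gives the combined coefficient $\ab+S\bb=\Gb^{-1}\yb$, which is exactly $R^{\rm ridgeless}(\yb)$. The invertibility checks are fine.

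However, your route is genuinely different from the paper's. The paper does not touch the formulas \eqref{coef_b}--\eqref{coef_a} at all; instead it works directly with the orthogonality characterization established in the proof of Theorem~\ref{Explicit_OR}: an element $\hat f$ satisfying $L(\hat f)=\yb$ solves \eqref{Hilbert_minimization} if and only if $\hat f-P_V\hat f\in\ker(L)^\perp$. Taking $\hat f=R^{\rm ridgeless}(\yb)$, one already knows $\hat f\in\ker(L)^\perp$ (this is the same characterization with $V=\{0\}$), and since $P_V\hat f\in V\subseteq\vspan\{u_1,\dots,u_m\}=\ker(L)^\perp$, the difference $\hat f-P_V\hat f$ lies in $\ker(L)^\perp$ as well. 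That is the whole proof. This is essentially the ``alternative, perhaps even shorter, route'' you sketch at the end, though your phrasing there runs it in the opposite direction (starting from a generic minimizer of \eqref{Hilbert_minimization} and trying to reach ridgeless); the paper's direction---start from ridgeless, verify the characterization for general $V$---is cleaner. Your matrix approach has the virtue of being fully explicit and self-contained once Theorem~\ref{Explicit_OR} is stated, while the paper's argument is a two-line conceptual check that makes transparent \emph{why} the choice of $I$ is irrelevant: the only property of $V$ used is the inclusion $V\subseteq\ker(L)^\perp$.
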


%\subsection{Proof of Theorem \ref{arbitraryV}}
\begin{proof}
Let $V=\vspan\{u_{i},i\in I\}$ for some $I \subseteq [1:m]$ 
and let $\hat{f}$ be the output of kernel ridgeless regression.
According to the proof of Theorem \ref{Explicit_OR}, 
to prove that $\hat{f}$ is the solution to \eqref{Hilbert_minimization},
we have to verify that $\hat{f} - P_V \hat{f} \in \ker(L)^\perp$.
Since we already know that $\hat{f} = \hat{f} - P_{\{0\}} \hat{f} \in \ker(L)^\perp$
(recall that kernel ridgeless regression is \eqref{Hilbert_minimization} with $\{0\}$ in place of~$V$),
it remains to check that $P_V \hat{f} \in \ker(L)^\perp$.
This simply follows from 
$P_V \hat{f} \hspace{-.5mm}\in \hspace{-.5mm} \vspan\{u_{i},i\in I\}
\hspace{-.5mm} \subseteq \hspace{-.5mm} \vspan\{u_1,\ldots,u_m\} 
\hspace{-.5mm} = \hspace{-.5mm}
\ker(L)^\perp$.
\end{proof}

\subsection{Spline models.}
From an Optimal Recovery point-of-view,
the success of \eqref{KRG_Min} can be surprising because it seems to use only data and no model assumption.
In fact, the model assumption occurs in the objective function being minimized.
Procedure \eqref{KRG_Min} favors  data-consistent functions which are themselves small.
If one preferred to favor data-consistent functions which have small derivatives,
one would instead consider, say,  
the program
\begin{equation}
\label{Spline_Min}
    \minimize{f\in W_{2}^{k}[0,1]} \|f^{(k)}\|_{L_{2}} \qquad \mbox{s.to } f(x_{i})=y_{i}, \quad i\in[1:m],
\end{equation}
with optimization variable $f$ in the Sobolev space $W_{2}^{k}[0,1]$. 
As it turns out,
this procedure coincides with the Optimal Recovery method that minimizes the worst-case error over the model set given by $\mathcal{K}=\{f\in W_{2}^{k}[0,1]:\|f^{(k)}\|_{L_{2}}\leq1\}$
and its solution is known explicitly \cite{de1963best}.
With $k=2$
(where one tries to minimize the strain energy of a curve constrained to pass through a prescribed set of points),
the solution is a cubic spline,
see \cite{wahba1990spline} for details.
For multivariate functions,
the solutions to problems akin to \eqref{Spline_Min} are also known explicitly:
they are thin plate splines \cite{duchon1977splines}.
More generally,
minimum-(semi)norm interpolation problems are what define the concept of abstract splines \cite{de1981convergence}.

{\bf Remark.}
When observation error is present,
exact interpolation conditions should not be enforced,
so it is natural to subsitutute \eqref{Hilbert_minimization} by a regularized problem similar to \eqref{KReg} but with $\|f-P_V f\|_\Hilbert^2$ acting as a reguralizer instead of $\|f\|_\Hilbert^2$.
This has already been proposed in \cite{li2007generalized} under the name
Generalized Regularized Least-Squares,
of course with a different motivation than Optimal Recovery. In fact,
a more general regularized problem where $\|f-P_V f\|_\Hilbert^2$ gives way to a squared seminorm also appeared in \cite{vito2004some}.
Such inverse-problem inspired techniques have been applied to statistical learning e.g. in 
\cite{Vito2005LearningFE}, which studied the consistency properties of the associated estimators.

\section{Optimal Recovery in Reproducing Kernel Hilbert Spaces (RKHS)}

We consider in this section the case where $\Fc = \Hilbert$ is a Hilbert space of functions defined on a domain $\Omega \subseteq \R^d$
for which point evaluations are continuous linear functionals.
In other words, we consider a reproducing kernel Hilbert space $\Hilbert_K$,
where $K: \Omega \times \Omega \to \R$ denotes the kernel characterized,
for any $\xb \in \Omega$, by
\begin{equation}
f(\xb) = \langle K(\xb,\cdot), f \rangle
\qquad \mbox{for all } f \in \Hilbert_K.
\end{equation}
In this way, the Riesz representers of
points evaluations at $\xb_i$'s
take the form $u_i = K(\xb_i,\cdot)$.
Thus, the Gramian of \eqref{Gram} has entries
\begin{equation}
\Gb_{i,i'} \hspace{-1mm}=\hspace{-1mm} \langle K(\xb_i,\cdot), K(\xb_{i'},\cdot) \rangle \hspace{-1mm}=\hspace{-1mm} K(\xb_{i'},\xb_i),
\; i,i' \in [1:m].
\end{equation}
As for the cross-Gramian of \eqref{CrossGram},
it has entries
\begin{equation}
\Cb_{i,j} = v_j(\xb_i),
\qquad i \in [1:m],
\; j \in [1:n],
\end{equation}
where $(v_1,\ldots,v_n)$ represents a basis for the space $V$.
Some possible choices of $K$ and $V$ are discussed below.

\subsection{Choosing the kernel.}
%{\color{red} In the univariate case,
%the kernel
%\begin{equation}
%    K(x,x') = 2 + \min \{ x,x' \},
%    \qquad x,x' \in [0,1],
%\end{equation}
%is associated with the Sobolev space $W_2^1[0,1]$ equipped with the inner product \begin{equation}
%    \langle f,g \rangle = f(0)g(0) +
%    \int_0^1 f'(t) g'(t) dt,
%    \qquad
%    f,g \in W_2^1[0,1].
%\end{equation}}
A kernel that is widely used in many learning problems 
is the Gaussian kernel given,
for some parameter $\sigma>0$, by
\begin{equation}
    K(\xb,\xb')=\exp\left(
    -\frac{\|\xb-\xb'\|^{2}}{2\sigma^{2}}
    \right),
    \qquad \xb,\xb' \in \R^d.
\end{equation}
The associated infinite-dimensional Hilbert space,
explicitly characterized in \cite{minh2010gaussiankernel},
has orthonormal basis $\{ \phi_\alpha, \alpha \in \mathbb{N}_0^d\}$, where
\begin{equation}
\label{ONBGK}
    \phi_{\alpha}(\xb) \hspace{-1mm}=\hspace{-1mm} \sqrt{\frac{(1/\sigma^{2})^{\alpha_1+\cdots+\alpha_d}}{\alpha_1! \cdots \alpha_d!}} \exp\hspace{-1mm}\left(-\frac{\|\xb\|^{2}}{2 \sigma^{2}}\right)\hspace{-1mm}
    x_1^{\alpha_1} \cdots x_d^{\alpha_d}.
\end{equation}
%{\color{red} Inner product:
%\begin{equation*}
%    \langle f,g\rangle = \sum_{\alpha_{1}+\cdots+\alpha_{d}=0}  \frac{w_{\alpha} v_{\alpha} \alpha_{1}!\cdots\alpha_{d}!}{(2/\sigma^{2})^{(\alpha_{1}+\cdots+\alpha_{d})}}
%\end{equation*}
%where $f=e^{-\|\xb\|^{2}/\sigma^{2}} \sum\limits_{\alpha_{1}+\cdots+\alpha_{d}=0}^{\infty} w_{\alpha}x_{1}^{\alpha_1}\cdots x_{d}^{\alpha_d}$ and $g=e^{-\|\xb\|^{2}/\sigma^{2}} \sum\limits_{\alpha_{1}+\cdots+\alpha_{d}=0}^{\infty} v_{\alpha}x_{1}^{\alpha_1}\cdots x_{d}^{\alpha_d}$

%\simon{This is not consistent with \eqref{ONBGK}; 
%also, the question is: how do I calculate $\langle f, g \rangle$ when $f$ and $g$ are not already given by their expansions on $\{\phi_\alpha\}$? }

\subsection{Choosing the approximation space.} 
Since a learning/recovery procedure uses both data and model (maybe implicitly),
its performance depends on the interaction between the two.
In Optimal Recovery,
and subsequently in Information-Based Complexity \cite{traub2003information},
it is often assumed that the model is fixed and that the user has the ability to choose evaluation points in a favorable way.
From another angle,
one can view the evaluation points as being fixed but the model could be chosen accordingly. 
For the applicability of Theorem \ref{Explicit_OR},
it is perfectly fine to select an approximation space $V$ depending on $\xb_1,\ldots,\xb_m$,
so long as it does not depend on $y_1,\ldots,y_m$.
Thus, one possible choice for the approximation space consists of 
$V = \vspan\{ K(\xb_i,\cdot), i \in I\}$
for some subset $I \subseteq [1:m]$. 
However, we have seen in Theorem \ref{arbitraryV} that such a choice invariably leads to kernel ridgeless regression.
Another choice for the approximation space is inspired by linear regression, which uses the space $\vspan\{1,x_1,\ldots,x_d\}$.
We do not consider this space verbatim, because its elements 
(or any polynomial function, for that matter, see \cite{minh2010gaussiankernel}) do not belong to the reproducing kernel Hilbert space with Gaussian kernel. Instead, we modify it slightly by multiplying with a decreasing exponential
and by allowing for degrees $k$ higher than one, so as to consider the space
\begin{equation}
\label{V_taylorfeature}
    V = \vspan\{ \phi_\alpha, \alpha_1+\cdots+\alpha_d \le k\},
\end{equation}
which has dimension $n = \binom{d+k}{d}$. We ignore the coefficients of $\phi_\alpha$ in numerical experiments, which has no effects on the test error. 
These $\phi_\alpha$'s are the so-called `Taylor features'
used in approximation of the Gaussian kernel \cite{cotter2011explicit}. 

%With $\widetilde{\xb}_i := [\sigma^2; \xb_i] \in \R^{d+1}$,
%the cross-Gramian $\Cb \in \R^{m \times (d+1)}$ is given by
%\begin{equation}
%\Cb = 
%    \begin{bmatrix}
%    \exp(-\|\xb_1\|^2/(2 \sigma^2)) \widetilde{\xb}_i^\top\\
%    \vdots\\
%    \exp(-\|\xb_m\|^2/(2 \sigma^2)) \widetilde{\xb}_m^\top
%    \end{bmatrix}
%    = \Db \widetilde{\Xb}^\top,
%\end{equation}
%where $\Db \in \R^{m \times m}$ is the diagonal matrix with entries $\exp(-\|\xb_i\|^2/(2 \sigma^2))$
%and $\widetilde{\Xb} \in \R^{(d+1) \times m }$
%is the matrix with columns $\widetilde{\xb}_1,\ldots, \widetilde{\xb}_m$.

\section{Experimental Validation}\label{sec:simul}

\subsection{Comparison of worst-case errors}
\label{CompWCE}

We first compare worst-case errors for the optimal recovery map (OR) described in Theorem \ref{Explicit_OR}
and for empirical risk minimizations  defined in \eqref{ERM}. 
They are only considered with $p=1$ (ERM1) and $p=2$ (ERM2). 
The algorithms
OR, ERM1, and ERM2 all operate with a specific space $V$ (as a hypothesis class), so direct comparisons can be made by selecting the same $V$ for all these algorithms.
According to Theorem \ref{computation_wce}, when $\Hilbert$ is a finite-dimensional Hilbert space, 
the computation of their worst-case errors is performed by semidefinite programming.
Here, we restrict ourselves to the case where $V$ is a $n$-dimensional subspace of $\Hilbert = \ell_{2}^N$, with $n=20$ and $N = 200$. 
The $m=50$ linear observations are randomly generated. Figure \ref{wce_plot}(a) confirms that OR yields the smallest worst-case errors,
hints at a quasi-linear dependence of the worst-case errors on the approximability parameter $\epsilon$,
and suggests that ERM2 yields smaller worst-case errors than ERM1.

In contrast,
Figure \ref{wce_plot}(b) suggests that ERM1 yields smaller worst-case errors than ERM2 when the standard empirical risk minimization \eqref{ERM}
is enhanced by replacing the overdemanding constraint $f \in V$ by the constraint $f \in \Kc$,
i.e., $\|f-P_Vf\|_\Hilbert \le \epsilon$.
Although the performances are now very close for all algorithms,
it has to be noted that in this case running ERM1 and ERM2 requires an a priori knowledge of $\epsilon$
while running OR does not.

\begin{figure*}[!t]
    \centering
    \begin{subfigure}[t]{0.5\textwidth}
        \centering
        \includegraphics[height=1.8in]{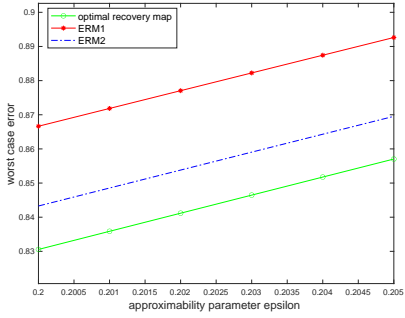}
        \caption{ ERM1 and ERM2 are produced with constraint $f\in V$.}
    \end{subfigure}%
    ~ 
    \begin{subfigure}[t]{0.5\textwidth}
        \centering
        \includegraphics[height=1.8in]{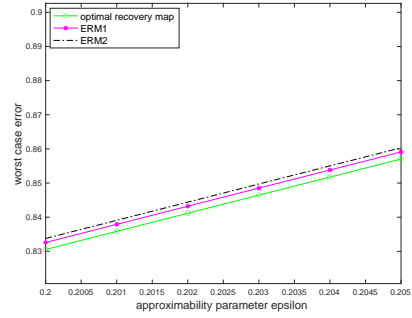}
        \caption{ERM1 and ERM2 are produced with constraint $f\in \Kc$}
    \end{subfigure}
    \caption{Optimal Recovery and Empirical Risk Minimization maps with $p=1$ and $p=2$.}
    \label{wce_plot}
\end{figure*}

\subsection{Test errors for non-IID data}\label{CompTE}
In this subsection, we implement the optimal recovery map on two real-world regression datasets, namely Years Prediction and Energy Use, both available on UCI Machine Learning Repository. 

We focus on the reproducing kernel Hilbert space $\Hilbert_K$ associated with Gaussian kernel throughout this experiment.
The space $V$ is spanned by a subset of Taylor features of order $k=1$, see \eqref{V_taylorfeature}, so that $\dim(V)$ goes up to $d+1$, where $d$ is the number of features in the datasets.
%We choose Taylor features up to order $1$ of Gaussian kernel derived in \cite{cotter2011explicit} as the span of $V$ for optimal recovery.
To choose the optimal kernel width, we conduct a grid search. Furthermore, to make the data non-IID, we sort both datasets according to their $5$-th feature in a descending order and then select the top $70\%$ as the training set and the bottom $30\%$ as the test set. %to create a non-iid training testing split.
Recall by Theorem \ref{Explicit_OR} that the optimal recovery map depends on the Hilbert space $\Hc_K$ and the subspace $V$. Therefore, it is natural to compare it to kernel ridgeless regression \eqref{ridgeless} (in $\Hc_K$) and Taylor features regression \eqref{Rerm} (in $V$).

The test error comparison is presented in Figure \ref{testerror_plot}. 
Due to the size of Years Prediction dataset, we do not perform kernel ridgeless regression on the full dataset, so we randomly subsample a $5000$ subset of the data and repeat the experiment for $40$ Monte Carlo simulations to average out the randomness. 
Therefore, error bars are presented in Figure \ref{testerror_plot}(a) to show the statistical significance. 
We observe that the optimal recovery map shows promising performance on both datasets. On Years Prediction dataset, Optimal Recovery outperforms kernel ridgeless regression for all $\dim(V)$.
On Energy Use dataset, 
it outperforms kernel ridgeless regression after $\dim(V)=2$. 
Also, Taylor features regression in the space $V$ is consistently inferior to the optimal recovery map. 
The U-shape Optimal Recovery curve in Figure \ref{testerror_plot}(a) demonstrates the trade-off between the compatibility indicator $\mu$ and the approximability parameter $\epsilon$. 

\begin{figure*}[!t]
    \centering
    \begin{subfigure}[t]{0.5\textwidth}
        \centering
        \includegraphics[height=1.8in]{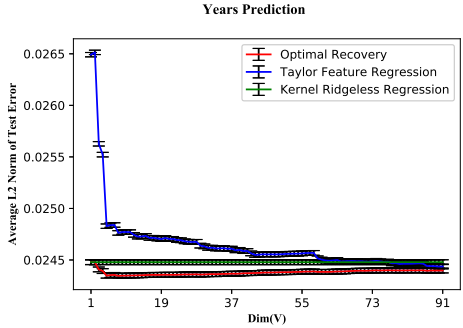}
        \caption{Test error comparison on Years Prediction}
    \end{subfigure}%
    ~ 
    \begin{subfigure}[t]{0.5\textwidth}
        \centering
        \includegraphics[height=1.8in]{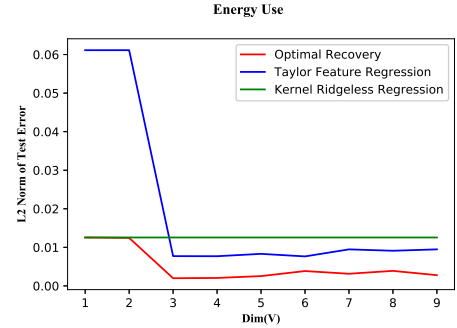}
        \caption{Test error comparison on Energy Use}
    \end{subfigure}
    \caption{Optimal Recovery and two  benchmark regression algorithms on two benchmark datasets.}
    \label{testerror_plot}
\end{figure*}

\section{Conclusion}
Generalization guarantees in Statistical Learning are based on the postulate of IID data, the pertinence of which is not guaranteed in all learning environments.
In this work, we considered the regression problem (with non-random data) in Hilbert spaces from an Optimal Recovery point-of-view, where the learner aims at minimizing the worst-case error. 
We first formulated a semidefinite program for calculating the worst-case error of any recovery map in finite-dimensional Hilbert spaces. 
Then, we provided a closed-form expression for optimal recovery map in the case where the hypothesis class $V$ is a linear subspace of any Hilbert space.
The formula coincides with kernel ridgeless regression when $V=\{0\}$ in a reproducing kernel Hilbert space. 
Our numerical experiments showed that, when $\dim(V)>0$, Optimal Recovery has the potential to outperform kernel ridgeless regression in the test mean squared error.

Our main focus was to provide an algorithmic perspective to Optimal Recovery, whose theory was initiated in the 70's-80's. 
Our findings revealed interesting connections with current Machine Learning methods. 
There are many directions to consider in the future, including:
\begin{enumerate}
\item[(i)] learning the hypothesis space $V$ from the data (instead of incorporating domain knowledge); 
\item[(ii)] developing Optimal Recovery with noise/error in the observations;
\item[(iii)] studying the overparametrized regime $\dim(V)>m$;
\item[(iv)] investigating the case where the hypothesis class $V$ is not a linear space.
\end{enumerate}

%Thanks to the improvements in Optimal Recovery, we are able to construct a user friendly optimal recovery map in Hilbert space and show it coincides with kernel ridgeless regression for certain choices of subspace. 
%Finally, we provide numerical experiments to support our theoretical results. Despite the fact that Optimal Recovery is formulated under the measurement of worst case error, simulations show that it has the potential to outperform kernel ridgeless regression  in test mean squared error.

%Restraining our attention to finite dimensional Hilbert spaces, we provide a framework to numerically calculate the worst case error of a recovery map via a semidefinite program. 

% if have a single appendix:
%\appendix[Proof of the Zonklar Equations]
% or
%\appendix  % for no appendix heading
% do not use \section anymore after \appendix, only \section*
% is possibly needed

% use appendices with more than one appendix
% then use \section to start each appendix
% you must declare a \section before using any
% \subsection or using \label (\appendices by itself
% starts a section numbered zero.)
%

% you can choose not to have a title for an appendix
% if you want by leaving the argument blank

% use section* for acknowledgment
\section*{Acknowledgment}

C.~L. and Y.~W. are supported by the Texas A\&M Triads for Transformation (T3) Program. S.~F. is partially supported by NSF grants DMS-1622134 and DMS-1664803. S.~F and S.~S. also acknowledge NSF grant CCF-1934904.

% Can use something like this to put references on a page
% by themselves when using endfloat and the captionsoff option.
\ifCLASSOPTIONcaptionsoff
  \newpage
\fi

\section*{Appendix}

Below,
we generalize the result of \cite{binev2017data} in two directions. 
For the first direction, 
instead of assuming that the target function $f_0$ itself is well approximated by elements of a set $V$,
we assume that it is some linear transform $T$ applied to $f_0$ that is well approximated.
This translates in the modification \eqref{ModifK} of the approximability set.
The novelty occurs not for invertible transforms,
but for noninvertible ones
(e.g. when $T$ represents a derivative, as in \eqref{Spline_Min}).
For the second direction,
instead of attempting to recover $f_0$ in full,
we assume that we only need to estimate a quantity 
$Q(f_0)$ depending on $f_0$,
such as its integral.
Although we focus on the extreme situations where $Q$ is the identity or where $Q$ is a linear functional,
the case of an arbitrary linear map $Q$ is covered.
Leaving the introduction of the transform $T$ aside,
one useful consequence of the result below is that
knowledge of (a basis for) the space~$V$ is not needed,
since only the values of the $\ell_i(v_j)$'s and $Q(v_j)$'s are required to form $(Q \circ R^{\rm opt})(\yb)$.

\begin{theorem}
\label{ThmGenBinevEtAl}
Let $\mathcal{F},\Hilbert,\mathcal{Z}$ be three normed spaces,
$\Hilbert$ being a Hilbert space,
and let $V$ be a subspace of $\Hilbert$.
Consider a linear quantity of interest $Q: \mathcal{F} \to \mathcal{Z}$
and a linear map $T: \mathcal{F} \to \Hilbert$.
For $\yb \in \R^m$,
define $R^{\rm opt}(\yb) \in \mathcal{F}$ as a solution to
\begin{equation}
\label{MinProgWithT}
    \minimize{f\in \mathcal{F}} \|Tf - P_V(Tf)\|_{\Hilbert} 
    \qquad \mbox{s.to }
    L(f) = \yb.
\end{equation}
Then the learning/recovery map $Q \circ R^{\rm opt}: \R^m \to \mathcal{Z}$
is locally optimal over the model set 
\begin{equation}
\label{ModifK}
    \K = \{ f \in \mathcal{F}: 
    {\rm dist}(Tf,V) \le \epsilon \}
\end{equation}
in the sense that, for any $z \in \mathcal{Z}$,
\begin{multline}
\label{OptWithT}
    \sup_{f \in \K, L(f) = \yb}
    \|Q(f) - Q \circ R^{\rm opt}(f)\|_{\mathcal{Z}}\\
    \le 
    \sup_{f \in \K, L(f) = \yb}
    \|Q(f) - z \|_{\mathcal{Z}}.
\end{multline}
\end{theorem}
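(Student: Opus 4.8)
The plan is to first extract an orthogonality property from the minimization \eqref{MinProgWithT}, and then run the classical ``symmetry of the feasible set forces its center to be an optimal estimate'' argument; the only genuinely new ingredient is that the functional defining $\K$ is now a seminorm precomposed with the extra linear map $T$.

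First I would record the orthogonality property. Write $\phi := (I_\Hilbert - P_V)\circ T : \mathcal{F} \to \Hilbert$, which is linear, so that $\dist(Tf,V) = \|\phi(f)\|_\Hilbert$ and \eqref{MinProgWithT} becomes: minimize $\|\phi(f)\|_\Hilbert$ over $f$ in the affine subspace $\hat{f} + \ker(L)$, where $\hat{f} := R^{\rm opt}(\yb)$. For any fixed $n \in \ker(L)$ the point $\hat{f} + tn$ is feasible for every $t \in \R$, so the parabola
\begin{equation*}
t \longmapsto \|\phi(\hat{f})\|_\Hilbert^2 + 2t\,\langle \phi(\hat{f}),\phi(n)\rangle_\Hilbert + t^2\|\phi(n)\|_\Hilbert^2
\end{equation*}
attains its minimum at $t = 0$; hence its linear coefficient vanishes, which gives $\langle \phi(\hat{f}),\phi(n)\rangle_\Hilbert = 0$ for every $n \in \ker(L)$. (For $T = I_\Hilbert$ this specializes to the orthogonality of $\hat{f} - P_V\hat{f}$ to $\ker(L)$ already invoked in the proofs of Theorems~\ref{Explicit_OR} and~\ref{arbitraryV}.)

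Next I would reflect a feasible point through $\hat{f}$. Fix $f \in \K$ with $L(f) = \yb$, and set $\tilde{f} := 2\hat{f} - f$ and $n := f - \hat{f} \in \ker(L)$. Then $L(\tilde{f}) = 2\yb - \yb = \yb$, while $\phi(f) = \phi(\hat{f}) + \phi(n)$ and $\phi(\tilde{f}) = \phi(\hat{f}) - \phi(n)$, so the orthogonality relation together with Pythagoras gives $\|\phi(\tilde{f})\|_\Hilbert = \|\phi(f)\|_\Hilbert \le \epsilon$. Thus $\tilde{f} \in \K$, i.e. $\tilde{f}$ is a legitimate competitor in the supremum on the right-hand side of \eqref{OptWithT}, and since $Q$ is linear, $Q(\tilde{f}) = 2Q(\hat{f}) - Q(f)$ is the reflection of $Q(f)$ through $Q(\hat{f})$.

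Finally I would close with a triangle-inequality (Chebyshev-center) estimate, which is all that is available since $\mathcal{Z}$ is merely normed: for every $z \in \mathcal{Z}$ and every admissible $f$,
\begin{align*}
2\,\|Q(f) - Q(\hat{f})\|_{\mathcal{Z}} &= \|Q(f) - Q(\tilde{f})\|_{\mathcal{Z}} \le \|Q(f) - z\|_{\mathcal{Z}} + \|Q(\tilde{f}) - z\|_{\mathcal{Z}}\\
&\le 2\sup_{f' \in \K,\, L(f') = \yb} \|Q(f') - z\|_{\mathcal{Z}},
\end{align*}
and taking the supremum over admissible $f$ on the left-hand side yields \eqref{OptWithT}. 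I expect the crux to be the middle paragraph: that reflection through the minimizer $\hat{f}$ is an isometry for the seminorm $f \mapsto \dist(Tf,V)$, so that it maps the set of admissible $f$ into itself --- this is precisely what the orthogonality relation delivers, and the rest is the standard centering argument. Degenerate cases are harmless: the displayed chain is vacuous when no admissible $f$ exists, and the argument never uses uniqueness of $\hat{f}$, so any minimizer will do.
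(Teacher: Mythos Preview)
Your proof is correct and shares the paper's key ingredients: the first-order orthogonality condition $\langle \phi(\hat f),\phi(n)\rangle_\Hilbert=0$ for all $n\in\ker(L)$, Pythagoras to see that $\hat f\pm n$ have the same $\K$-seminorm, and the triangle-inequality/symmetrization step with the pair $f,\tilde f$. The organizational difference is that the paper introduces the compatibility indicator
\[
\mu=\sup_{u\in\ker(L)\setminus\{0\}}\frac{\|Q(u)\|_{\mathcal Z}}{\dist(Tu,V)}
\]
and proves the two-sided bound $\sup_{f}\|Q(f)-Q(\hat f)\|_{\mathcal Z}=\mu\bigl[\epsilon^2-\|\phi(\hat f)\|_\Hilbert^2\bigr]^{1/2}\le\sup_{f}\|Q(f)-z\|_{\mathcal Z}$, thereby computing the optimal worst-case error explicitly; to get the lower bound it must pick an extremal $u$ attaining $\mu$ (which strictly speaking calls for a limiting argument if the supremum is not achieved). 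Your reflection argument bypasses $\mu$ entirely by working with an arbitrary admissible $f$ and its mirror $\tilde f=2\hat f-f$, which is cleaner and sidesteps the attainment issue, at the cost of not producing the explicit value of the minimal local error.
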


\begin{proof}
Let us introduce the compatibility indicator
\begin{equation}
\label{DefMu}
    \mu := \sup_{u \in \ker(L) \setminus \{0\}}
    \frac{\|Q(u)\|_\mathcal{Z}}{{\rm dist}(Tu,V)}.
\end{equation}
Given $\yb \in \R^m$,
let $\hat{f} = R^{\rm opt}(\yb)$ denote the solution to \eqref{MinProgWithT}.
We shall establish \eqref{OptWithT} by showing on the one hand that 
\begin{equation}
\label{WithT1}
    \sup_{\substack{f \in \K\\ L(f) = \yb}}\hspace{-1mm}
    \|Q(f) \hspace{-1mm}-\hspace{-1mm} Q(\hat{f})\|_{\mathcal{Z}}
    \le \mu \big[ \epsilon^2 \hspace{-1mm}-\hspace{-1mm} \|T\hat{f} \hspace{-1mm}-\hspace{-1mm} P_V(T \hat{f})\|_\Hc^2 \big]^{1/2}
\end{equation}
and on the other hand that,
for any $z \in \mathcal{Z}$.
\begin{equation}
\label{WithT2}
    \sup_{\substack{f \in \K\\ L(f) = \yb}}\hspace{-1mm}
    \|Q(f) \hspace{-1mm}-\hspace{-1mm} z \|_{\mathcal{Z}}
    \ge \mu \big[ \epsilon^2 \hspace{-1mm}-\hspace{-1mm} \|T\hat{f} \hspace{-1mm}-\hspace{-1mm} P_V(T \hat{f})\|_\Hc^2 \big]^{1/2}.
\end{equation}
Let us start with \eqref{WithT1}.
Considering an arbitrary $u \in \ker(L)$,
notice that the quadratic expression $t \in \R$ given by
\begin{align}
\nonumber
  \|&T(\hat{f} + t u) \hspace{-1mm} - \hspace{-1mm} P_V(T(\hat{f} + t u))\|_{\Hilbert}^2  
  = \|T\hat{f}  \hspace{-1mm} - \hspace{-1mm} P_V(T\hat{f} )\|_{\Hilbert}^2 \\
 & + 2 t \langle T\hat{f}  \hspace{-1mm} - \hspace{-1mm} P_V(T\hat{f} ),Tu \hspace{-1mm} - \hspace{-1mm} P_V(Tu) \rangle + \mathcal{O}(t^2)
\end{align}
is miminized at the point $t=0$.
This forces the linear term $\langle T\hat{f}  - P_V(T\hat{f} ),Tu - P_V(Tu) \rangle$ to vanish, in other words 
\begin{equation}
    \label{OrthProp}
    \langle T\hat{f}  - P_V(T\hat{f} ),Tu \rangle = 0
    \qquad \mbox{for any }
    u \in \ker(L).
\end{equation}
Now, considering $f \in \K$ such that $L(f) = \yb$
written as $f = \hat{f} + u$ for some $u \in \ker(L)$,
the fact that $f \in \K$ reads
\begin{align}
\nonumber
    \epsilon^2 & \ge
    \|T(\hat{f} + u) \hspace{-1mm} - \hspace{-1mm} P_V(T(\hat{f} +  u))\|_{\Hilbert}^2\\
    & = 
    \|T\hat{f} \hspace{-1mm} - \hspace{-1mm} P_V(T\hat{f})\|_{\Hilbert}^2
    + \|Tu \hspace{-1mm} - \hspace{-1mm} P_V(Tu)\|_{\Hilbert}^2.
\end{align}
Rearranging the latter gives
\begin{equation}
    {\rm dist}(Tu,V) \le \big[ \epsilon^2 \hspace{-1mm}-\hspace{-1mm} \|T\hat{f} \hspace{-1mm}-\hspace{-1mm} P_V(T \hat{f})\|_\Hc^2 \big]^{1/2}.
\end{equation}
It remains to take the definition  \eqref{DefMu}
into account in order to bound $\|Q(f) - Q(\hat{f}) \|_{\mathcal{Z}} = \|Q(u) \|_{\mathcal{Z}}$ and arrive at \eqref{WithT1}.

Turning to \eqref{WithT2}, we consider $u \in \ker(L)$ such that
\begin{align}
\label{Ac1}
    \|Q(u)\|_{\mathcal{Z}} 
    & = \mu \, {\rm dist}(Tu, V),\\
    \label{Ac2}
    \|Tu \hspace{-1mm} - \hspace{-1mm} P_V(Tu)\|_\Hilbert
    & = \big[ \epsilon^2 \hspace{-1mm}-\hspace{-1mm} \|T\hat{f} \hspace{-1mm}-\hspace{-1mm} P_V(T \hat{f})\|_\Hc^2 \big]^{1/2}.
\end{align}
It is clear that $f^\pm := \hat{f} \pm u$ both satisfy $L(f^\pm) = \yb$,
while $f^\pm \in \K$ follows from
\begin{align}
\nonumber
    \|T f^\pm \hspace{-1mm}&-\hspace{-1mm} P_V(T f^\pm)\|_\Hc^2
    = \|(T\hat{f} \hspace{-1mm}-\hspace{-1mm} P_V(T \hat{f}) ) \pm (T u \hspace{-1mm}-\hspace{-1mm} P_V(T u) ) \|_\Hc^2\\
    & = \|T\hat{f} \hspace{-1mm}-\hspace{-1mm} P_V(T \hat{f}) \|^2 + \| Tu \hspace{-1mm}-\hspace{-1mm} P_V(T u)  \|_\Hc^2
    = \epsilon^2.
\end{align}
Therefore, for any $z \in \mathcal{Z}$,
\begin{align}
\nonumber
    \sup_{\substack{f \in \K\\ L(f) = \yb}}\hspace{-1mm}
    \|Q(f) \hspace{-1mm}&-\hspace{-1mm} z \|_{\mathcal{Z}}
     \ge \max\{ \|Q(f^+) \hspace{-1mm}-\hspace{-1mm} z \|_{\mathcal{Z}},
    \|Q(f^-) \hspace{-1mm}-\hspace{-1mm} z \|_{\mathcal{Z}}\}  \\
    \nonumber
    & \ge \frac{1}{2} \big( 
    \|Q(f^+) \hspace{-1mm}-\hspace{-1mm} z \|_{\mathcal{Z}} + 
    \|Q(f^-) \hspace{-1mm}-\hspace{-1mm} z \|_{\mathcal{Z}}\}
    \big)\\
    & \ge \frac{1}{2}  \| Q(f^+ - f^-) \|_{\mathcal{Z}}
    = \| Q(u) \|_{\mathcal{Z}}.
\end{align}
Taking \eqref{Ac1} and \eqref{Ac2} into account finishes to prove \eqref{WithT2}.
\end{proof}

\bibliographystyle{./IEEEtran}
\bibliography{refs.bib}

\end{document}